\def\BibTeX{{\rm B\kern-.05em{\sc i\kern-.025em b}\kern-.08em
    T\kern-.1667em\lower.7ex\hbox{E}\kern-.125emX}}
\title{p2pGNN: A Decentralized Graph Neural Network for Node Classification in Peer-to-Peer Networks}
\newtheorem{theorem}{Theorem}
\begin{document}
\author{\uppercase{Emmanouil Krasanakis}, \uppercase{Symeon Papadopoulos}, \uppercase{Ioannis Kompatsiaris}}
\address{Centre for Research \& Technology Hellas, Information Technologies Institute, 6th km Charilaou-Thermi, 57001, Thermi, Thessaloniki, Greece \\e-mail: \{maniospas,papadop,ikom\}@iti.gr}
\tfootnote{This work was partially funded by the European Commission under contract numbers H2020-951911 AI4Media and H2020-825585 HELIOS.}
\corresp{Corresponding author: Emmanouil Krasanakis (e-mail: maniospas@iti.gr).}
\history{March 2022}
\doi{10.1109/ACCESS.2022.3159688}
\begin{abstract}
    In this work, we aim to classify nodes of unstructured peer-to-peer networks with communication uncertainty, such as users of decentralized social networks. Graph Neural Networks (GNNs) are known to improve the accuracy of simple classifiers in centralized settings by leveraging naturally occurring network links, but graph convolutional layers are challenging to implement in decentralized settings when node neighbors are not constantly available. We address this problem by employing decoupled GNNs, where base classifier predictions and errors are diffused through graphs after training. For these, we deploy pre-trained and gossip-trained base classifiers and implement peer-to-peer graph diffusion under communication uncertainty. In particular, we develop an asynchronous decentralized formulation of diffusion that converges to centralized predictions in distribution and linearly with respect to communication rates. We experiment on three real-world graphs with node features and labels and simulate peer-to-peer networks with uniformly random communication frequencies; given a portion of known labels, our decentralized graph diffusion achieves comparable accuracy to centralized GNNs with minimal communication overhead (less than 3\% of what gossip training already adds).
\end{abstract}

\begin{keywords}
Decentralized computing, Machine learning, Network theory (graphs)
\end{keywords}

\titlepgskip=-15pt
\maketitle

\section{Introduction}\label{introduction}
\PARstart{T}{he}  pervasive integration of mobile devices and the Internet-of-Things in everyday life has created an expanding interest in processing their collected data \cite{wang2018deep,li2018learning,lim2020federated}. However, traditional data mining techniques require communication, storage and processing resources proportional to the number of devices and raise data control and privacy concerns. An emerging alternative is to mine data at the devices gathering them with protocols that do not require costly or untrustworthy central infrastructure. One such protocol is \emph{gossip averaging} \cite{daily2018gossipgrad}, which averages local model parameters across pairs of devices during training.

As an example, existing social media applications often rely on central platforms, such as Meta (Facebook, Instagram), Viber, and Telegram. However, increasing concerns of how personal and potentially sensitive data are handled by central controllers have motivated the development of decentralized social media \cite{seong2010prpl}, in which user devices communicate directly with each other. Thus, new opportunities are created for AI-powered decentralized media subsystems. Yet, to date, there is a lack of machine learning frameworks to support decentralized machine learning in uncontrolled communication environments. In this work, we make steps towards the development of such frameworks for deployment of decentralized graph-based learning ``in the wild''.

\par
We tackle the specific problem of classifying points of a shared feature space when each one is stored at the device generating it, i.e. each device accesses only its own point but all devices collect the same features. For example, mobile devices of decentralized social media users could predict user interests based on locally stored content features, such as the bag-of-words of posted messages, and user-disclosed interests as target labels. We further consider devices that are nodes of peer-to-peer networks and communicate with each other based on underlying relations, such as friendship or proximity. In this setting, social network overlays coincide with communication networks. However, social behavior dynamics (e.g. users going online or offline) could prevent devices from communicating on-demand or at regular intervals. Ultimately, with who and when communication takes place can not be controlled by learning algorithms.
\par
When network nodes corresponding to data points are linked based on social relations, a lot of information is encapsulated in their link structure in addition to data features. For instance, social network nodes one hop away often assume similar classification labels, a concept known as homophily \cite{mcpherson2001birds,berry2020going}. Yet, existing decentralized learning algorithms do not naturally account for information encapsulated in links, as they are designed for structured networks of artificially generated topologies. In particular, decentralized learning often focuses on creating custom communication topologies that optimize some aspect of learning \cite{koloskova2019decentralized} and are thus independent from data. 
\par
Our investigation differs from the above assumption in that we classify data points stored in decentralized devices that form unstructured communication networks, where links capture real-world activity (e.g. social interactions) unknown at algorithm design time and thus of irregular structure, as demonstrated in Fig.~\ref{fig:topologies}. In our setting, devices coincide with graph nodes and we use the two terms interchangeably.

\begin{figure}[htpb]
    \centering
    \frame{\includegraphics[width=0.2\textwidth,trim={4cm 3cm 3cm 4cm},clip]{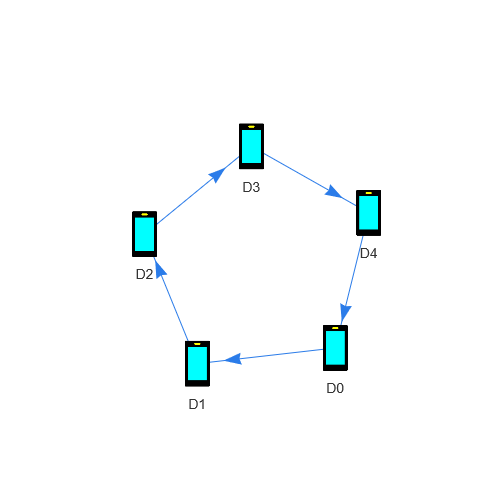}}
    \frame{\includegraphics[width=0.2\textwidth,trim={4cm 4cm 4cm 4cm},clip]{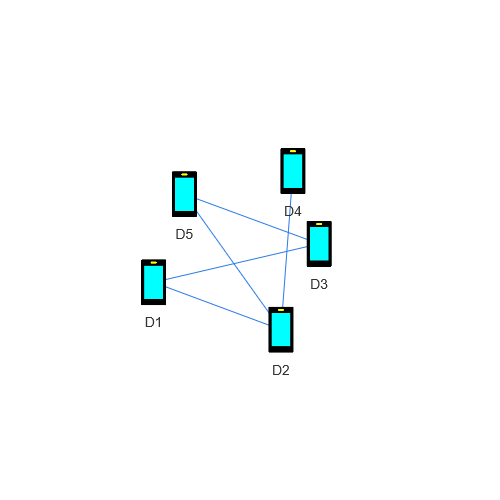}}
    \caption{A directed ring-structured communication network (left) and an undirected unstructured example (right) of devices $D1,\dots,D5$.
    }
    \label{fig:topologies}
\end{figure}
 
\par
If a centralized service performed classification, Graph Neural Networks (GNNs) could be used to improve the accuracy of base classifiers, such as ones trained with gossip averaging, by accounting for link structure (Subsection~\ref{gnns}). But, if we tried to implement GNNs with the same decentralized protocols, connectivity constraints would prevent devices from timely collecting latent representations from communication neighbors, where these representations are needed to compute graph convolutions. 
\par
To tackle this problem, we propose working with \textit{decoupled GNNs}, where network convolutions are separated from base classifier training and organized into graph diffusion components. Given this architecture, we start from either pre-trained base classifiers or train those with gossip protocols. We then realize graph diffusion in peer-to-peer networks by developing an algorithm, called \textit{p2pGNN}, whose fragments run on each node and converge at similar predictions as graph diffusion while working under uncontrolled irregular communication initiated by device users. Our analysis is supported by a novel theoretical construct, which we dub \textit{decentralized graph signals}, that describes decentralized diffusion primitives in the irregular communication setting. Critically, our algorithm supports online modification of base classifier predictions while these are being diffused. As a result, all components of decoupled GNN fragments run at the same time and eventually converge to the desired results.
\par
Our contribution is twofold. First, we establish a decentralized setting for classifying peer-to-peer network devices. To our knowledge, our approach is the first that considers communication links themselves useful for the decentralized learning task, i.e. in networks where communication topology is retrieved from the real world instead of being imposed on it. We also introduce the concept of decentralized graph signals that formalize graph diffusion in this setting.
\par
Second, we develop the p2pGNN algorithm that parses decentralized graph signals and, given existing methods of training or deploying base classifiers in peer-to-peer networks under uncertain availability, approximates originally centralized decoupled GNN components to improve accuracy. For this algorithm, we theoretically show fast convergence to similar prediction quality as centralized architectures. Furthermore, we experiment on simulated peer-to-peer networks under uncertain availability, where we verify that it successfully takes advantage of graph diffusion components to improve base classifier accuracy, closely matches the accuracy of fully centralized computations, and incurs only small communication overheads.

\section{Background}\label{background}
\subsection{Graph Neural Networks}\label{gnns}
Graph Neural Networks (GNNs) are a machine learning paradigm in which links between data samples are used to improve the predictions of base neural network models \cite{wu2020comprehensive}. In detail, samples are linked to form graphs based on real-world relations and information diffusion schemes smooth (e.g. average) latent attributes across graph neighbors before transforming them with dense layers and non-linear activations to new representations to be smoothed again. This is repeated either ad infinitum or for a fixed number of steps to combine original representations with structural information. 
\par
Notably, in our setting, there is an 1-1 correspondence between samples and devices. However, although GNN propagation takes place in a decentralized-like manner, i.e. nodes work independently, transformation parameters are shared and learned across all nodes.
\par
GNN architectures tend to suffer from over-smoothing if too many (e.g. more than two) smoothing layers are employed. However, using few layers limits architectures to propagating information only few hops away from its original nodes. Mitigating this issue often involves recurrent links to the first latent representations, which lets GNNs achieve at least the same theoretical expressiveness as graph filters \cite{klicpera2018predict,chen2020simple}. In fact, it has been argued that the success of GNNs can in large part be attributed to the use of recurrency rather than end-to-end training of seamless architectures \cite{huang2020combining}. As a result, recent works have introduced decoupled architectures that achieve the same theoretical expressive power as end-to-end training by training base statistical models, such as two-layer perceptrons, to make predictions, and smoothing the latter through graph edges.
\par
In this work, we build on the FDiff-scale prediction smoothing proposed by Huang et al. \cite{huang2020combining}, which diffuses the base predictions and respective errors of base classifiers to all graph nodes using a constrained personalized PageRank that retains training node labels. Then, a linear trade-off between errors and predictions is calculated for each node and the outcome is again diffused with personalized PageRank to make final predictions. This process generalizes to multiclass predictions by replacing node values propagated by personalized PageRank with vectors holding prediction scores, where initial predictions are trained by the base classifier to minimize a cross-entropy loss. Architecture details and our motivation for using it are discussed in Subsection~\ref{stateppr}.

\subsection{Decentralized Learning}\label{decentralized learning}
Decentralized learning refers to protocols that help pools of devices learn statistical models by accounting for each other's data. Conceptually, each device holds its own autonomous version of the model 
and training aims to collectively make those converge to being similar to each other and to a centralized training equivalent, i.e. to be able to replicate would-be centralized predictions locally.
\par
Many decentralized learning practices have evolved from distributed learning, which aims to speed up the time needed to train statistical models by splitting calculations among many available devices, called workers. Typically, workers perform computationally heavy operations, such as gradient estimation for subsets of training data, and send these to a central infrastructure that orchestrates the learning process. 
\par
A well-known variation of distributed learning occurs when data batches are split across workers a-priori, for example because they are gathered by these, and are sensitive in the sense that they cannot be directly presented to the orchestrating service. This paradigm is called federated learning and is often realized with the popular federated averaging (FedAvg) algorithm \cite{mcmahan2017communication}. FedAvg performs several learning epochs in each worker before sending parameter gradients to a server that uses the average across workers to update a model and send it back to all of them.
\par
By definition, distributed and federated learning train one central model that is fed back to workers to make inferences. However, gathering gradients and sending back the model requires a central service with significantly higher throughput than individual workers to simultaneously communicate with all of them and orchestrate learning. To reduce the related infrastructure costs and remove the need for a central authority, decentralized protocols have been introduced to let workers directly communicate with each other.\footnote{Decentralized learning is sometimes referred to as decentralized federated learning, but this is different than distributed federated learning.} These require either constant communication between workers or a rigid (e.g. ring-like) topology and many communication rounds to efficiently learn  \cite{lian2018asynchronous,luo2020prague,zhou2020communication}. Most decentralized learning practices have evolved to or are variations of \textit{gossip averaging}, where devices exchange and average (parts of) their learned parameters with random others \cite{daily2018gossipgrad,hu2019decentralized,savazzi2020federated,hegedHus2021decentralized,danner2018token,koloskova2019decentralized}.

\section{A Peer-to-Peer Graph Neural Network}\label{our approach}
\subsection{Problem Formulation}
We work on peer-to-peer networks whose devices are linked based on their ability to send messages to each other, even through channels of uncertain availability. These networks can be described with static adjacency matrices $A\in\mathbb{R}^{N\times N}$, where $N$ is the number of nodes in the peer-to-peer network. These matrices comprise elements:$$A[u,v]=\{1\text{ if }(u,v)\text{ are linked}, \text{else }0\}$$
\par
We further consider devices $u$ to hold feature vectors $X[u]\in\mathbb{R}^F$ of a shared feature space, such as average word embeddings of user text messages, where $F$ is the number of features and it is the same for all nodes. Finally, some training devices in the network $u\in V_{train}$ hold manually provided class labels with one-hot encodings $Y[u]\in\mathbb{R}^C$, where $C$ is the number of classes and $\text{argmax}{Y[u]}$ retrieves labels from their encodings. We aim to make encoding predictions $\hat{Y}[u]$ for \textit{all} devices $u$ so that $\text{argmax}{\hat{Y}[u]}$ correspond to true class labels with high accuracy. Importantly, to avoid centralization, each device needs to create predictions about itself, for instance to estimate its user's interests among a list of topics, while only viewing information transmitted by communicating devices.
\par
If our goal was to make feature-based predictions without accounting for communication links, we would select a base classifier $R_\theta:\mathbb{R}^F\to\mathbb{R}^C$ of trainable parameters $\theta$ and deploy its computational model to all devices. Then, devices would learn their own parameters $\theta[u]$ to make predictions $R_{\theta[u]}(X[u])$ that tightly approximate centralized optimization of the computational model's parameters on training node labels. For example, Gossip averaging would set up an iterative process performing gradient updates on local data in training nodes while averaging parameters between communicating nodes. This would allow node predictions to account for training data residing more than one hops away.
\par
If we had the luxury of a central service and willingness of training device users (only) to disclose their data to it, we could instead perform centralized training and deploy a common set of learned parameters through the service. This way, non-training devices would classify themselves without exposing local data.
\par
In Section~\ref{introduction} we argued that communication links often pertain to real-world relations, which GNNs can leverage to improve classification accuracy. In a centralized setting, this could be achieved with GNN classifiers $G_\theta:\mathbb{R}^{F\times N}\times \mathbb{R}^{N\times N}\to\mathbb{R}^C$ of parameters $\theta$. These take two inputs: a) tables gathering all node features $X\in\mathbb{R}^{N\times F}$, where rows $X[u]$ are device $u$ feature vectors, and b) adjacency matrices $A$. They output prediction matrices $\hat{Y}=G_\theta(X, A)$, whose rows $\hat{Y}[u]$ are device $u$ prediction vectors. Unfortunately, even if parameters $\theta$ where to be learned by a centralized service, these classifiers could not be directly deployed to perform in-device inference, since they rely on non-local information, such as the whole communication network's structure and features of nodes more than one hops away propagated through the structure. 
\par
In this work, our goal is to develop GNNs that let peer-to-peer devices $u$ classify themselves by running fragments $\text{frag}_v(G_{\theta[u]}, X[u], A[u])$ of GNN architectures $\mathcal{G}_\theta$. These only account for local features $X[u]$ and only communicate with the fragments of linked neighbors found in $A[u]$. Then, fragments in devices both learn parameters $\theta[u]$ that approximate optimal ones and perform additional computations that let them approximate centralized estimations:
$$\hat{Y}[u]\approx \text{frag}_v(G_{\theta[u]}, X[u], A[u])$$

\subsection{Communication Protocol}\label{protocol}
Peer-to-peer networks often suffer from node churn, power usage constraints, as well as virtual or physical device mobility that cumulatively make communication channels between nodes irregularly-available. In this work, we assume that linked nodes keep communicating over time without removing links or introducing new ones, though links can become temporarily inactive. We expect this assumption to hold true in social networks that evolve slowly, i.e., in which user interactions are many times more frequent than link changes. From the perspective of link mining, these networks can be viewed as static relational graphs. 
\par
We stress that static relations are exhibited even when devices rapidly switch communication patterns, as long as they are limited within fixed sets of neighbors. In practice, slow evolution can even be enforced by narrowing our focus to communication between long-time social neighbors, therefore ignoring temporal social behavior noise.
\par
Thus, we consider static adjacency matrices $A$ like above and encode uncertainty with time-evolving communication matrices $A_{com}(t)$, whose non-zero elements indicate exchanges through the corresponding links: $$A_{com}(t)[u,v]=\{1\text{ if }u,v\text{ communicate at time }t, \text{else }0\}$$
To simplify the rest of our analysis, and without loss of generality, we adopt a discrete notion of time $t=0,1,2,\dots$ that orders the sequence of communication events. We stress that real-world time intervals between consecutive timestamps could vary and that, for the communication adjacency matrix $A$, it holds that $A_{com}(t)[u,v]=1\Rightarrow A[u,v]=1$.
\par
We now provide a framework in which peer-to-peer nodes learn to classify themselves by exchanging information through channels represented by time-evolving communication matrices. This waits for the infrequent timeframes when channels become active and executes the broadly popular Send-Receive-Acknowledge communication protocol to exchange information. In particular, devices $u$ are equipped with identifiers $u$.id and operations $u$.{}{SEND}, $u$.{}{RECEIVE} and $u$.{}{ACKNOWLEDGE} that respectively implement message generation, receiving message callbacks that generate new messages to send back, and acknowledging that sent messages have been received while sending back the recipient's generated messages. Expected usage of these operations is demonstrated in Algorithm~\ref{protocolalg}. 

\begin{algorithm}[htpb]
\begin{algorithmic}
\State \textbf{Inputs:} devices $u\in V$ with identifiers $u.$id, time-evolving $A_{com}(t):V\times V\to\mathbb{R}$
\For{$t= 0,1,2,\dots$}
\ForAll{$(u,v)$ such that $A_{com}(t)[u,v]=1$}
\State message$\leftarrow$ $u$.\textsc{send}($v$.id)
\State reply$\leftarrow$ $v$.\textsc{receive}($u$.id, message)
\State $u$.\textsc{acknowledge}($v$.id, reply)
\EndFor
\EndFor
\end{algorithmic}
\caption{Send-Receive-Acknowledge protocol}\label{protocolalg}
\end{algorithm}

\subsection{GNN Architecture}\label{stateppr}
GNN architectures can be used to combine relation-based peer-to-peer connectivity with device features to improve classification accuracy compared to classifiers using only features. This is achieved by incorporating graph convolutions in multilayer parameter-based neural network transformations to smooth latent representations across neighbor nodes. 
\par
We identify two realistic implementations of smoothing in peer-to-peer networks under uncertain availability: either a) the last retrieved representations are used, or b) node features and links from many hops away are stored locally for in-device computation of graph convolutions. In the first case, convergence to equivalent centralized model parameters is slow, since learning impacts neighbor representations only during communication.\footnote{Porting decentralized learning protocols designed for constant device availability in our setting requires waiting for other devices to send representations before running local computations. The option of using the last retrieved representations relaxes this scheme by progressing computations even when some neighbors take too long to communicate.} In the second case, multilayer architectures aiming to broaden node receptive fields from many hops away end up storing most network links and node features in each node; this violates data privacy and could be computationally intractable given limited device capabilities.
\par
To avoid these shortcomings, we build on existing decoupled GNNs outlined in Subsection~\ref{gnns}, which in our setting separate the challenges of training base classifiers with leveraging network links to improve predictions. In particular, they consider base classifiers that can parse features matrices $X$ to output matrices $R_\theta(X)$ with rows holding the predictions of respective feature rows $R_\theta(X)[u]=R_\theta(X[u])$. If base classifiers are trained on the features and labels of node sets $V_{train}$, we build on the FDiff-scale decoupled GNN's description \cite{huang2020combining}, whose predictions we transcribe as:
\begin{equation}\label{target diffusion}
\begin{split}
   \hat{Y}&= (\mathcal{I}-\beta D^{-0.5}AD^{-0.5})^{-1}P_\beta\big(R_\theta(X) 
   \\&\quad\quad+ (1-\beta)s (\mathcal{I}-A_{mask}D^{-1})^{-1}P_1 (Y-R_\theta(X)) \big)
\end{split}
\end{equation}
where $\mathcal{I}$ is the unit matrix, $D=\text{diag}([\sum_v A[u,v]]_{u})$ is a diagonal matrix of node degrees, masked adjacency matrices prevent diffusion from affecting training nodes with elements \begin{equation*}
    A_{mask}[u,v]=
    \{A[u,v]\text{ if }u=v\text{ or }v\not\in V_{train},
    \text{else } 0\},
\end{equation*} 
and a diagonal matrix is used to control the injection of personalized node information in the diffusion scheme per:
$$P_\gamma=\tfrac{1}{1-\beta}\text{diag}(\{1-\beta\text{ if }u\in V_{train}, \text{else }1-\gamma\}_u)$$
The values $\beta\in[0,1)$ (symbol chosen for clarity), $s\in\mathbb{R}$ are hyperparameters, whereas $\gamma$ is a variable that helps express the two versions of $P_\gamma$ with one formula. 
\par
In terms of our problem formulation, \eqref{target diffusion} effectively implements a GNN architecture $$G_\theta(X,A)=\text{diff}(R_\theta,X,Y,A)$$ that diffuses predictions $R_\theta(X)$ through the graph with an operation $\text{diff}(\dots)$. This comprises two sub-operations of the following form: \begin{equation}\label{ppr}
    \pi_\infty=(\mathcal{I}-aD^{-d}A_{mask}D^{d-1})^{-1}P_\gamma\pi_0
\end{equation}
where $a,d$ are again helper variables to express both sub-operations with one formula.
\par
The sub-operation performed first, i.e. the one inside the largest parenthesis block in \eqref{target diffusion}, is identical to \eqref{ppr} for $d=0,a=1,\gamma=1$. The last value makes it so that only the personalization $\pi_0[u]$ of training nodes $u\in V_{train}$ is diffused through the graph. The second sub-operation sets $d=0.5,a=\beta,\gamma=\beta$ and is equivalent to constraining the personalized PageRank scheme \cite{page1999pagerank,tong2006fast} with normalized communication matrix $D^{-d}AD^{d-1}$ so that it preserves original node predictions $\pi_n[u]=\pi_0[u]$ assigned to training nodes $v\in V_{train}$. Effectively, it is equivalent to restoring training node scores after each power method iteration $$\pi_{n+1}=\beta D^{-d}AD^{d-1}\pi_n+(1-\beta)\pi_0$$ where each iteration step is a specific type of graph convolution. The representations to be diffused by the two sub-operations are training node errors and a trade-off between diffused errors and node predictions respectively.

We stress that, although the above-described architecture exists in the literature, supporting its diffusion operation in peer-to-peer networks under uncertain availability requires the analysis we present in the rest of this section.

\subsection{Peer-to-Peer Personalized PageRank}\label{decppr}
If matrix row additions are atomic node operations, implementing the graph diffusion of \eqref{target diffusion} in peer-to-peer networks with uncertain availability is reduced to implementing the two versions of \eqref{ppr}'s constrained personalized PageRank presented above.
\par
Previous works have computed non-personalized (for which $\pi_0$ columns are normalized vectors of ones) or personalized PageRank in peer-to-peer networks by letting peers hold fragments of the network spanning multiple nodes and merging these when peers communicate \cite{parreira2006efficient,zhang2020decentralized,bahmani2010fast,hu2012localized}. Our setting is different in that peers coincide with nodes and merging network fragments requires untenable bandwidths proportional to network size to exchange merged sub-networks. Instead, we devise a new computational scheme that is lightweight in terms of communication. 
\par
On the surface, iterative synchronized convolutions require node neighbor representations at intermediate steps. However, an early work by Lubachevsky and Mitra \cite{lubachevsky1986chaotic} showed that, for non-personalized PageRank, decentralized schemes holding local estimations of earlier-computed node scores (or, in the case of graph diffusion, vectors) converge to the same point as centralized ones as long as communication intervals are bounded. 
\par
This motivates us to similarly iterate personalized PageRank by using the last communicated neighbor representations to update local nodes. In this subsection we mathematically describe this scheme and show that it converges in probability to the same point as its centralized equivalent with linear rate (which corresponds to an exponentially degrading error) and even if personalization evolves over time but still converges with linear rate. Notably, keeping older representations to calculate graph convolutions was not viable when these were entangled with representation transformations, but employing decoupled GNNs lets us separate learning from diffusion.
\par
To set up a decentralized implementation of personalized PageRank, we introduce a theoretical construct we dub \textit{decentralized graph signals} that describes decentralized operations in peer-to-peer networks while accounting for personalization updates over time, in case these are trained while being diffused. Our structure is defined as matrices $S\in\big(\mathbb{R}^C\big)^{N\times N}$ with multidimensional vector elements $S[u,v]\in \mathbb{R}^C$ (in our case $C$ is the number of classes) that hold in devices $u$ the estimate of device $v$ representations. Rows $S[u]$ are stored on devices $u$ and only cross-column operations are impacted by communication constraints.
\par
We now consider a scheme that updates decentralized graph signals $S(t)$ at times $t$ per the rules:
\begin{equation}\label{system}
    \begin{split}
        S(t)[u,v]&=S(t-1)[u,v]
        \\&\quad+A_{com}(t)[u,v]\big(\tfrac{S(t-1)[v][v]}{{D[u,u]^{d}}}-S(t-1)[u,v]\big)\\
        S(t)[u,u]&=P_\gamma[u,u]S_0(t)[u] + a \sum_v \tfrac{A_{mask}[u,v]}{D[u,u]^{1-d}}S(t)[v][v]
    \end{split}
\end{equation}
where $S_0(t)[u]\in\mathbb{R}^C$ are time-evolving representations of nodes $u$. The first of the above equations describes node representation exchanges between devices based on the communication matrix, whereas the second one performs a local update of personalized PageRank estimation given the last updated neighbor estimation that involves only data stored on devices $u$. Then, Theorem~\ref{thm:convergence} shows that the main diagonal of the decentralized graph signal deviates from the desired node representations with an error that converges to zero mean with linear rate. This weak convergence may not perfectly match centralized diffusion. However, it still guarantees that the outcomes of the two correlate in large part.

\begin{theorem}\label{thm:convergence}
Let $\lim_{t\to\infty}S_0(t)[u]= \pi_0[u]$ be bounded and converge in distribution with linear rate, the elements of $A_{com}$ be independent discrete random variables with fixed means with at least one of them less than $1$, $d\in\{0,0.5\}$, and either $a\in[0,1)$ or $a=1,d=0$. Then $\lim_{t\to\infty}S(t)[u,u]$ converges in distribution to $\pi_\infty[u]$ of \eqref{ppr} with linear rate.
\end{theorem}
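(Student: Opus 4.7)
The plan is to recast \eqref{system} as a linear stochastic recursion in an error vector, establish that its expected one-step operator is a contraction with rate $\rho<1$, show that the forcing term inherits the linear rate at which $S_0(t)\to\pi_0$, and finally lift convergence of the mean to convergence in distribution via a Markov-inequality argument on second moments. Concretely, I would stack all entries $S(t)[u,v]$, both off-diagonal and diagonal, into a single random vector $\mathbf{s}(t)$. The off-diagonal update in \eqref{system} is affine in $\mathbf{s}(t-1)$ with random coefficients $A_{com}(t)[u,v]\in\{0,1\}$, and the diagonal update is affine in the current diagonal block with coefficients $aA_{mask}[u,v]/D[u,u]^{1-d}$ together with an exogenous forcing $P_\gamma[u,u]S_0(t)[u]$. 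Substituting the candidate equilibrium $\mathbf{s}_\infty$, whose off-diagonal blocks equal $\pi_\infty[v]/D[u,u]^d$ and whose diagonal blocks solve \eqref{ppr} at personalization $\pi_0$, shows $\mathbf{s}_\infty$ is a fixed point in expectation, so the error $E(t)=\mathbf{s}(t)-\mathbf{s}_\infty$ obeys $E(t)=M(t)E(t-1)+r(t)$ with $r(t)$ driven purely by $S_0(t)-\pi_0$. Since $A_{com}(t)$ has fixed means and is independent of $E(t-1)$, taking expectations yields $\mathbb{E}[E(t)]=\bar{M}\,\mathbb{E}[E(t-1)]+\mathbb{E}[r(t)]$ with a deterministic matrix $\bar{M}$ depending only on $\bar{p}_{uv}=\mathbb{E}[A_{com}[u,v]]$, $D$, $a$, and the mask.

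The main obstacle is verifying that $\bar{M}$ has spectral radius strictly less than $1$. I would block-partition $\bar{M}$ into an off-diagonal-to-off-diagonal block that is diagonal with entries $1-\bar{p}_{uv}\in[0,1]$, a coupling block channelling diagonal estimates into neighbouring rows, and a diagonal-to-diagonal block controlled by $aD^{-d}A_{mask}D^{d-1}$. For $a\in[0,1)$ the latter is a strict sub-stochastic contraction by the classical personalized-PageRank argument, while for $a=1,d=0$ contraction instead comes from $A_{mask}$ zeroing the rows of $V_{train}$, so the induced random walk is absorbing and hence strictly contractive on the active block. A Schur-complement reduction across the two blocks, combined with the hypothesis that at least one $\bar{p}_{uv}<1$ to rule out a degenerate synchronous sub-cycle pushing off-diagonal eigenvalues to unity, yields a bound $\rho<1$. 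Folding in the assumed linear convergence of $S_0(t)\to\pi_0$, which enters the recursion only through $\mathbb{E}[r(t)]$, delivers $\|\mathbb{E}[E(t)]\|=O(\rho_\star^t)$ for some $\rho_\star<1$.

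To lift this to convergence in distribution with the same linear rate, I would use that $A_{com}(t)\in\{0,1\}$ and $S_0(t)$ are uniformly bounded, so a parallel induction on the centred second moment $\mathbb{E}\|E(t)-\mathbb{E}[E(t)]\|^2$ produces an analogous linear recursion with contraction coefficient controlled by $\|\mathbb{E}[M(t)^\top M(t)]\|$, which is also strictly less than one by the same block reasoning. Markov's inequality then gives $\Pr(|S(t)[u,u]-\pi_\infty[u]|>\varepsilon)=O(\rho_\star^t/\varepsilon^2)$, which is linear convergence in probability and hence in distribution. The truly hard step throughout is the spectral analysis of $\bar{M}$ in the $a=1,d=0$ regime, where contraction cannot come from scalar damping and must be extracted purely from the absorbing structure of the masked graph together with the communication-mean slack hypothesis.
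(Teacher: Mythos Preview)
Your proposal is sound in spirit but takes a markedly different route from the paper's proof, and in several respects is actually more careful than the paper itself.

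The paper's argument is considerably more elementary. It first assumes without loss of generality that $V_{train}=\emptyset$ (claiming that additional training nodes only accelerate convergence), works entirely with the diagonal expectation vector $s(t)[u]=\mathbb{E}\{S(t)[u,u]\}$, and never tracks the off-diagonal entries $S(t)[u,v]$ separately. The key simplification is to replace the heterogeneous communication means $\bar p_{uv}$ by the uniform lower bound $p_{com}=\min_{u,v}\mathbb{E}\{A_{com}[u,v]\}<1$, which collapses the expected update to the one-line recursion $s(t)=(1-a)s_0(t)+aWs(t-1)$ with $W=p_{com}D^{-d}AD^{d-1}$. Contraction then follows directly from the spectral radius bound $a\sigma\le a\,p_{com}<1$, with no block decomposition or Schur complement required. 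By contrast, your stacked-vector formulation with $\bar M$ keeps all the $\bar p_{uv}$ distinct and extracts contraction from a block analysis; this is more general and avoids the somewhat heuristic ``slowest rate'' reduction, but it is substantially heavier machinery for the same conclusion about the mean.

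Two further points of divergence are worth noting. First, for the case $a=1,d=0$ you locate contraction in the absorbing structure of $A_{mask}$, whereas the paper, having already set $V_{train}=\emptyset$, instead appeals to the recursion being an irreducible Markov chain whose unique stationary point is $\pi_\infty$; your treatment is arguably the more honest one here, since the absorbing nodes are precisely what make the first sub-operation in \eqref{target diffusion} well posed. Second, the paper's proof stops at convergence of the expectation $s(t)\to\pi_\infty$ and never explicitly lifts this to convergence in distribution; your second-moment recursion plus Markov inequality is an addition the paper does not supply. In that sense your outline is more complete relative to the stated claim, at the cost of having to justify the extra contraction bound on $\mathbb{E}[M(t)^\top M(t)]$, which does not follow automatically from the first-moment analysis and would need its own argument.
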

\begin{proof}
Without loss of generality, we assume $V_{train}=\emptyset$, for which $A_{mask}=A$. More training nodes only add constraints to the diffusion scheme that force it to converge faster.
\par
Let $s(t)$ and $s_0(t)$ be vectors with elements $s(t)[u]=\mathbb{E}\{S(t)[u,u]\}$ and $s_0(t)[u]=\mathbb{E}\{P_\gamma[u,u]S_0(t)[u]\}$, where $\mathbb{E}\{\cdot\}$ is the expected value operation. Since the communication rate mean is fixed for each edge, it holds that:
$$s(\infty)=s_0(\infty)(1-a)+aD^{-d}AD^{d-1}s(\infty)$$
which, for a communication matrix $A$ and $a\in[0,1)$ yields the solution
$s(\infty)=(1-a)(I-aD^{-d}AD^{d-1})^{-1}s_0(\infty)$. For eigenvalues $\lambda$ of $D^{-d}AD^{d-1}$ when $d\in\{0,0.5\}$ it holds that $|\lambda|\leq 1$ (from the properties of doubly stochastic and Markovian matrices) and the corresponding eigenvalues of $D^{-d}AD^{d-1}$ become $1-a\lambda>0$, which makes it invertible. Hence, the solution is unique and coincides with $\pi_\infty$. For $a=1$ and $d=0$, $s_0(\infty)=\pi_\infty$ as the convergence point of the same irreducible Markov chain.
\par
For the same quantities, the convergence rate would be the same or faster if all communications took place with probability $p_{com}=\min_{u,v}\mathbb{E}\{A_{com}[u,v]\}< 1$ where $A_{com}$ is the communication matrix. Thus, we consider a communication matrix $A_{com}$ whose non-zero elements are sampled from $A$ with probability $p_{com}$ and analyse the latter to find the slowest possible convergence rate. In this setting, we obtain the recursive formula:
$$s(t)=s_0(t)(1-a)+aWs(t-1)$$
where $W=\mathbb{E}\{D^{-d}A_{com}D^{d-1}\}=p_{com}D^{-d}AD^{d-1}$. Thus, denoting as $\sigma=p_{com}\sigma_A$ the spectral radius of $W$, where $\sigma_A\leq 1$ is the spectral radius of the matrix $D^{-d}AD^{d-1}$ it holds that:
\begin{align*}
&\|s(t)-s(\infty)\|
\\&\quad=\|(1-a)(s_0(t)-s_0(\infty))+aW(s(t-1)-s(\infty))\|
\\&\quad\leq (1-a)\|s_0(t)-s_0(\infty)\|+a\sigma\|s(t-1)-s(\infty)\|
\\&\quad\leq (1-a)r_0^t\|s_0(0)-s_0(\infty)\|+a\sigma\|s(t-1)-s(\infty)\|
\end{align*}
where $r_0< 1$ is the linear convergence rate of $s_0(t)$. Thus, for $\sigma\leq p_{com}< 1,a\leq 1$, we calculate the behavior as $t\to\infty$ to obtain the linear convergence rate
$\lim_{t\to\infty}\frac{\|s(t)-s(\infty)\|}{\|s(t-1)-s(\infty)\|}\leq a \sigma<1$.
\end{proof}
\par
Algorithm~\ref{procedures}, which we call \textit{p2pGNN}, realizes \eqref{target diffusion} as decentralized algorithm fragments. These run on peer-to-peer network nodes $u$ and communicate with social neighbors $v$ under the Send-Receive-Acknowledge protocol to refine feature-based predictions based social communication links, as shown in Fig.~\ref{fig:overview}. We implement the protocol's operations, node initialization given prediction vectors and target labels, and the ability to update predictions. Nodes are initialized per $u.${INITIALIZE}($R_{\theta[u]}(X[u])$, $Y[u]$), where the last argument is a vector of zeroes for non-training nodes. The first argument is base classifier estimations from (locally) trained parameters $\theta[u]$ that can also be updated later on, for example after gossip averaging updates, by calling $u.${UPDATE}($R_\theta(X[u]))$. 
\par
We implement graph diffusion with decentralized graph signals \textit{predictions} and \textit{errors}, where the former uses the outcome of the latter. Diffusion fragment predictions -that is, the main diagonal of the decentralized graph signal \textit{predictions}- are stored in $u$\textit{.prediction}$=\text{frag}_v(G_{\theta[u]}, X[u], A[u])$, where $G_\theta$ is the FDiff-scale architecture. There are two hyper-parameters to be selected before deployment: $\beta\in[0,1)$ that determines the diffusion rate and $s$ that trades-off errors and predictions. Importantly, given linear or faster convergence rates for base classifier updates, Theorem~\ref{thm:convergence} yields linear convergence in distribution for \textit{errors} and hence for the in-code variable \textit{combined} of each node. Therefore, from the same theorem, \textit{predictions} also converges linearly in distribution.

\begin{figure*}[!htpb]
    \centering
    \frame{\includegraphics[width=0.9\textwidth,clip]{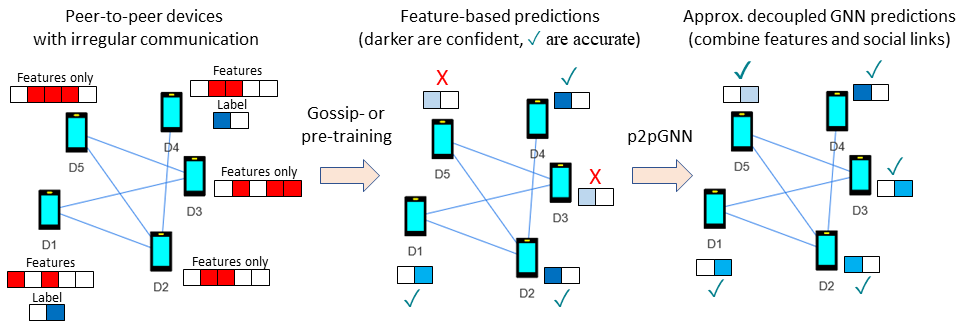}}
    \caption{p2pGNN helps peer-to-peer devices classify themselves by improving local feature-based classifiers with fragments of decentralized graph diffusion that approximate the FDiff-scale decoupled GNN.}
    \label{fig:overview}
\end{figure*}

\begin{algorithm}[htpb]
\begin{algorithmic}
\Procedure{initialize}{base\_prediction , target} 
\State $u$.predictions $\leftarrow $ Map()
\State $u$.errors $\leftarrow $ Map()
\State $u$.target $\leftarrow$ target
\State $u$.\textsc{update}(base\_prediction)
\EndProcedure
\Procedure{update}{base\_prediction}
\State $u$.base\_prediction $\leftarrow$ base\_prediction 
\State $u$.prediction $\leftarrow$ prediction
\If{$\|u.\text{target}\|\neq 0$}
\State $u$.error $\leftarrow$ (prediction $-$ target)
\EndIf
\EndProcedure
\Procedure{receive}{$v$.id, message}
\State message $\leftarrow$ $u$.\textsc{send}($v$.id)
\State $u$.\textsc{acknowledge}($v$.id, message)
\State\Return message
\EndProcedure
\Procedure{send}{$v$.id}
\State\Return $\frac{u.\text{prediction}}{|u.\text{predictons}|^{0.5}}$, $\frac{u.\text{error}}{|\text{$u$.errors}|}$
\EndProcedure
\Procedure{acknowledge}{$v$.id, message}
\State prediction, error $\leftarrow$ message
\State $u$.predictions[$u$.id] $\leftarrow$ $u$.prediction
\State $u$.errors[$u$.id] $\leftarrow$ $u$.error
\State $u$.predictions[$v$.id] $\leftarrow$ prediction
\State $u$.errors[$v$.id] $\leftarrow$ error
\If{$\|u.\text{target}\|=0$}
\State $u$.error $\leftarrow$ $\sum u$.errors.values()
\State combined $\leftarrow$ $u$.base\_prediction$+s\cdot u$.error
\Else
\State combined $\leftarrow$ $u$.base\_prediction
\EndIf
\State $u$.prediction $\leftarrow$ $(1-\beta)\cdot$combined
\State $\quad\quad+\frac{\beta}{|\text{$u$.predictions}|^{0.5}}$$\sum $$u$.predictions.values()
\EndProcedure
\end{algorithmic}
\caption{p2pGNN operations at devices $u$}\label{procedures}
\end{algorithm}

\section{Experiments}
\label{experiments}

\subsection{Datasets and Simulation}
To compare the ability of peer-to-peer learning algorithms to make accurate predictions, we experiment on three datasets that are often used to assess the quality of GNNs \cite{shchur2018pitfalls}; the Citeseer \cite{namata2012query}, Cora \cite{sen2008collective} and Pubmed \cite{namata2012query} social graphs. Pre-processed versions of these are retrieved from the programming interface of the publicly available Deep Graph Library \cite{wang2019deep} and comprise node features and class labels. They also come along training-validation-test sets commonly used in GNN literature experiments and which we also use. 
\par
The selected datasets comprise 
social links between their nodes and textual node feature data. We consider them representative samples of complex social networks with node features, even if they comprise document instead of human or sensor nodes. Their quantitative characteristics are summarized in Table~\ref{datasets}. In practice, the class labels of training and validation nodes would have been manually provided by respective devices (e.g. submitted by their users) and would form the ground truth to train base models.

\begin{table}[htpb]
    \centering\setlength\tabcolsep{3pt} 
    \caption{Dataset details}
    \begin{tabular}{l r r r r r r r}
         \textbf{Dataset} &  \textbf{Nodes} & \textbf{Links} & \textbf{Features} & \textbf{Labels} & \textbf{Train} & \textbf{Valid} & \textbf{Test}\\
         \hline
         Citeseer & 3,327 & 9,228 & 3,703 & 6 & 120 & 500 & 1,000\\
         Cora & 2,708 & 10,556 & 1,433 & 7 & 140 & 500 & 1,000\\
         Pubmed & 19,717 & 88,651 & 500 & 3 & 60 & 500 & 1,000\\
    \end{tabular}
    \label{datasets}
\end{table}

We use these datasets to simulate peer-to-peer networks with the same nodes and links as in the dataset graphs and fixed probabilities for communication through links at each time step, uniformly sampled from the range $[0,0.1]$. To speed up experiments, we further force nodes to engage in only one communication at each time step by randomly determining which edges to ignore when conflicts arise; we thus use threading to parallelize experiments by distributing time step computations between available CPUs (this is independent of our decentralized setting and its only purpose is to speed-up simulations). 
\par
Finally, we measure classification accuracy of test labels after 1000 time steps (all algorithms converge well within that number) and report its average across five experiment repetitions. Similar results are obtained for communication rates sampled from different range intervals. Experiments are available online\footnote{\url{https://github.com/MKLab-ITI/decentralized-gnn} Apache License 2.0.} and were conducted on a machine running Python 3.6 with 64GB RAM (they require at least 12GB available to run) and 32x1.80GHz CPUs.

\begin{table*}[!htbp]
\small
    \centering
    \caption{Comparing the accuracy of different types and training schemes of base algorithms and their combination with the diffusion of p2pGNN. Accuracy is computed after $1000$ time steps and averaged across $5$ peer-to-peer simulation runs.}
    \begin{tabular}{l | c c c | c c c | c c c}
    ~ & \multicolumn{3}{c|}{\textbf{Base}} & \multicolumn{3}{c}{\textbf{p2pGNN}} & \multicolumn{3}{c}{\textbf{Fully Centralized GNN}} \\
    ~ &  {Citeseer} & {Cora} & {Pubmed} & {Citeseer} & {Cora} & {Pubmed}& {Citeseer} & {Cora} & {Pubmed}\\
    \hline
    \multicolumn{1}{c}{} & \multicolumn{9}{c}{\textbf{Pre-trained}}\\
    MLP & 52.3\% & 54.9\% & 70.9\% & {}{67.8\%} & {}{81.5\%} & {}{76.0\%} & 69.0\% & 84.0\% & 81.2\%\\
    LR & 59.4\% & 58.7\% & 72.2\% & {}{70.5\%} & {}{82.0\%} & {}{77.3\%} & 70.3\% & 85.7\% & 81.5\%\\
    \multicolumn{1}{c}{} & \multicolumn{9}{c}{\textbf{Gossip}}\\
    MLP & {}{63.1\%} & 66.3\% & 74.9\% & 61.3\% & {}{80.8\%} & {}{78.0\%} & 69.0\% & 84.0\% & 81.2\%\\
    LR & 61.8\% & 79.9\% & 78.7\% & {}{61.4\%} & {}{80.8\%} & {}{78.7\%} & 70.3\% & 85.7\% & 81.5\%\\
    Labels & 15.9\% & 11.6\% & 22.0\% & {}{61.1\%} & {}{80.8\%} & {}{71.5\%} & 61.5\% & 78.9\% & 78.6\%
    \end{tabular}
    \label{tab:acc60}
\end{table*}

\subsection{Base Classifiers}
Experiments span the following three base classifiers. These cover a wide breadth of machine learning sophistication, from no learning to neural networks. Hence, we expect usage of other base classifiers to exhibit similar qualitative outcomes to those we report later on.

\begin{itemize}
\item\textit{MLP} -- A multilayer perceptron is often employed by GNNs \cite{klicpera2018predict,huang2020combining}. This consists of a dense two-layer architecture starting from a transformation of node features into $64$-dimensional representations activating ReLU outputs and a dense transformation of the latter whose softmax aims to predict one-hot encodings of labels.
\item\textit{LR} -- A simple multilabel logistic regression classifier whose softmax aims to predict one-hot encodings of classification labels. 
\item\textit{Label} -- Classification that repeats training node labels. If no diffusion is performed, this outputs random predictions for test nodes.
\end{itemize}
\par
MLP and LR are trained towards minimizing the cross-entropy loss of known node labels with Adam optimizers \cite{kingma2014adam,bock2018improvement}. We set learning rates to $0.01$, which is a value often used for training on similarly-sized datasets, and maintain the default momentum parameters proposed by the optimizer's original publication. For MLP, we use $50\%$ dropout for the dense layer to improve robustness and for all classifies we L2-regularize dense layer weights with $0.0005$ penalty. 
\par
We do not perform hyperparameter tuning, as in practice further protocols would be needed to make peer-to-peer nodes learn a common architecture optimal for a set of validation nodes. Instead, the above-described parameter values are commonly used defaults. For FDiff-scale hyperparameters, we select a personalized PageRank restart probability often used for graphs of several thousand nodes $1-\beta=0.1$ and error scale parameter $s=1$, where the latter is selected so that it theoretically satisfies a heuristic requirement of perfectly reconstructing the class labels of training nodes.

\subsection{Compared Approaches}
We experiment with the following two versions of MLP and LR classifiers, which differ with respect to whether they are pre-trained and deployed to nodes or learned via gossip averaging. In total, experiments span 2 MLP + 2 LR + Label = 7 base classifiers.
\vspace{0.3em}
~\\\textit{Pre-trained} -- Training classifier parameters in a centralized architecture over $3000$ epochs, where parameter updates of the Adam optimizer aim to maximize the cross-entropy loss of the training node set. We select the parameters at the epoch maximizing the validation node set loss, effectively tuning the number of epochs. For faster training, we perform early stopping if the validation node set loss has not decreased for $100$ epochs, which happens well within the designated maximum number of epochs, i.e. there would be no benefit or change to training time if more maximum epochs were considered. 
\par
We remind that, in practice, pre-trained classifiers can take the form of a service (e.g. a web service) that trains parameters $\theta$ based on sample data submitted by some (but not by necessarily many) devices and hosts the result. In this case, all devices $u$ query the service to obtain identical copies of the pre-trained parameters $\theta[u]=\theta$ and use these for in-device predictions and potential improvement of the latter with peer-to-peer graph diffusion. Only data of training and validation nodes are shared with the centralized service and the rest retain privacy --- hence we consider this approach partially decentralized. For ease of understanding, we assume that training has been completed before the first time step of simulated peer-to-peer communication, but in practice our approach allows linear rate (or faster) updates based on intermediate training results.
\vspace{0.3em}
~\\\textit{Gossip} -- Fully decentralized gossip averaging, where each node holds a copy of the base classifier and parameters are averaged between communicating nodes. Since no stopping criterion can be enforced, both training and validation nodes contribute to training of base classifier fragment parameters $\theta[u]$. In particular, the simulated devices corresponding to those nodes perform epoch updates on local instances of the Adam optimizer every time they are involved in a communication. During these updates, each device performs one gradient update to reduce the cross-entropy loss of its one local data sample before performing the averaging.
\par
If training data were independent and identically distributed and with many samples residing on each device, this approach could be considered a state-of-the-art baseline in terms of accuracy, as indicated by the theoretical analysis of Koloskova et al. \cite{koloskova2019decentralized} and experiment results of Niwa et al. \cite{niwa2020edge}. However, our setting of classifying devices ties at most one sample to each device and hence does not preserve these requirements. Thus, the efficacy of this practice is uncertain. We also consider the Label classifier as natively Gossip, as it does not require any centralized infrastructure.
\vspace{0.3em}
\par
For all base classifiers, we report: a) their vanilla accuracy, b) the accuracy of passing base predictions through the FDiff-scale scheme of \eqref{target diffusion}, as approximated via p2pGNN operations presented in Algorithm~\ref{procedures}, and c) the accuracy of passing the predictions of centralized counterparts through an also centralized implementation of FDiff-scale with the same hyperparameters, i.e. the last approach is fully centralized.
\par
Finally, given that training does not depend on diffusion, we perform the latter by considering both training and validation node labels as known information. That is, both types of nodes form the set $V_{train}$ of our analysis. Ideally, p2pGNN would leverage the homophilous node communications to improve base accuracy and tightly approximate fully-centralized predictions. In this case, it would become a decentralized equivalent to centralized diffusion that works under uncertain communication availability and does not expose predictive information to devices other than communicating graph neighbors.

\subsection{Results}
In Table~\ref{tab:acc60} we compare the accuracy of base algorithms vs. their augmented predictions with the decentralized p2pGNN and a fully centralized implementation of FDiff-scale. We remind that the last two schemes implement the same architecture and differ only on whether diffusion runs on peer-to-peer networks or not respectively. We can see that, in case of pre-trained base classifiers, p2pGNN successfully improves accuracy scores  by wide margins, i.e. 7\%-47\% relative increase. In fact, the improved scores closely resemble the ones of centralized diffusion, i.e. with less than 3\% relative decrease, for the Citeseer and Cora datasets. In these cases, we consider our peer-to-peer diffusion algorithm to have successfully decentralized its components. On the Pubmed dataset, centralized schemes are replicated less tightly (this also holds true for simple Label propagation), but there is still substantial improvement compared to pre-trained base classifiers.
\par
On the other hand, results are mixed for base classifiers trained via gossip averaging. Before further exploration, we remark that MLP and LR outperform their pre-trained counterparts in large part due to a combination of training with larger sets of node labels (both training and validation nodes) and ``leaking'' the graph structure into local classifier fragment parameters due to non-identically distributed node class labels. Thus, gossip training already implicitly incorporates diffusion. However, after diffusion is performed, accuracy does not reach the same levels as pre-trained base classifiers---in fact, in the Citesser and Cora datasets, homophilous parameter training reduces the diffusion of classifier fragment parameters to the diffusion of class labels. This indicates that classifier fragments tend to correlate node features with graph structure and hence additional diffusion operations are not necessarily meaningful. Characteristically, the linear nature of LR makes its base gossip-trained and p2pGNN versions near-identical. Since this issue systemically arises from gossip training shortcomings,  we leave its mitigation to future research.

\par
Overall, experiment results indicate that, in most cases, p2pGNN successfully applies GNN principles to improve base classifier accuracy. Importantly, although neighbor-based gossip training of base classifiers on both training and validation nodes outperforms models pre-trained on only training nodes (in which case validation nodes are used for early stopping), decentralized graph diffusion of the latter exhibits the highest accuracy across most combinations of datasets and base classifiers.

\subsection{Practical Exploration}
To gain an understanding of our approach's practical applicability, in Table~\ref{cost} we investigate the added communication overhead of employing decentralized graph diffusion. To do this, we serialize messages using the \textit{pickle} library \cite{van1995python} and measure the number of bytes the result takes up in-memory. This depends on the number of exchanged classifier parameters and decentralized graph signal transmisions and is fixed for each dataset.\footnote{Data compression encodings could be employed to reduce communication costs, but we expect these to maintain similar relative differences, since numerical data of few zeroes are exchanged.} In the real world, serialized messages could be sent alongside other forms of communication (e.g. social messaging) to guarantee that they reach their recipients. Alternatively, they could be exchanged whenever communication channels become available.
\par
We can see that, thanks to decoupled GNNs propagating vectors of few class label estimations and their errors, only a small overhead is added to information transmission, which lies in the order of magnitude of less than a kilobyte. In fact, this overhead can be considered negligible when compared to the communication cost of gossip training of MLP and LR base classifiers that requires 40 or more times the number of bytes. As a final note, we stress that these experiments do not capture (report as zero) communication costs for receiving pre-trained models from a central infrastructure, as this is an one-time operation. 

\begin{table}[htpb]
\small
    \centering\setlength\tabcolsep{3pt} 
    \caption{Comparing overhead in bytes (B) and kilobytes (kB---used for large overheads with rounded off decimal digits) during peer-to-peer communication between base algorithms and p2pGNN variations. The latter require less than one additional kilobyte.}\label{cost}
    \begin{tabular}{l | c c c | c c c}
    ~ & \multicolumn{3}{c|}{\textbf{Base}} & \multicolumn{3}{c}{\textbf{p2pGNN}} \\
    ~ &  {Citeseer} & {Cora} & {Pubmed} & {Citeseer} & {Cora} & {Pubmed}\\
    \hline
    \multicolumn{1}{c}{} & \multicolumn{6}{c}{\textbf{Pre-trained}}\\
    MLP & 0 & 0 & 0 & 334B & 350B & 286B\\
    LR & 0 & 0 & 0 & 334B & 350B & 286B\\
    \multicolumn{1}{c}{} & \multicolumn{6}{c}{\textbf{Gossip}}\\
    MLP & 1899kB & 738kB & 258kB & 1899kB & 738kB & 258kB\\
    LR & 178kB & 82kB & 12kB & 178kB & 82kB & 12kB\\
    Labels & 0 & 0 & 0 & 334B & 350B & 286B\\
    \end{tabular}
\end{table}

Finally, in Fig.~\ref{fig:convergence} we investigate the convergence process of p2pGNN variations in terms of predictive accuracy. To do this, we plot how accuracy evolves over times in one repetition of our experiments (that is, for a specific randomization seed) and the accuracy when communications are performed at half the rate. First, we verify that diffusion exhibits linear convergence, as accuracy values quickly approach their asymptotic limit. This is achieved within 100-200 time steps for our experiments and less than twice as many time steps when the communication rate is halved. 
\par
To understand why the product between the communication rate and the number of steps does not increase, we refer to the proof of Theorem~\ref{thm:convergence}, where the convergence rate is upper-bounded by the minimum communication rate $p_{com}$ between nodes (since the convergence rate is less than $a\sigma< p_{com}$). Thus, halving the communication rate of all edges also halves the upper stochastic bound of the convergence rate and at most doubles convergence time.

\begin{figure*}[htpb]
    \centering
    \includegraphics[width=0.9\textwidth,clip]{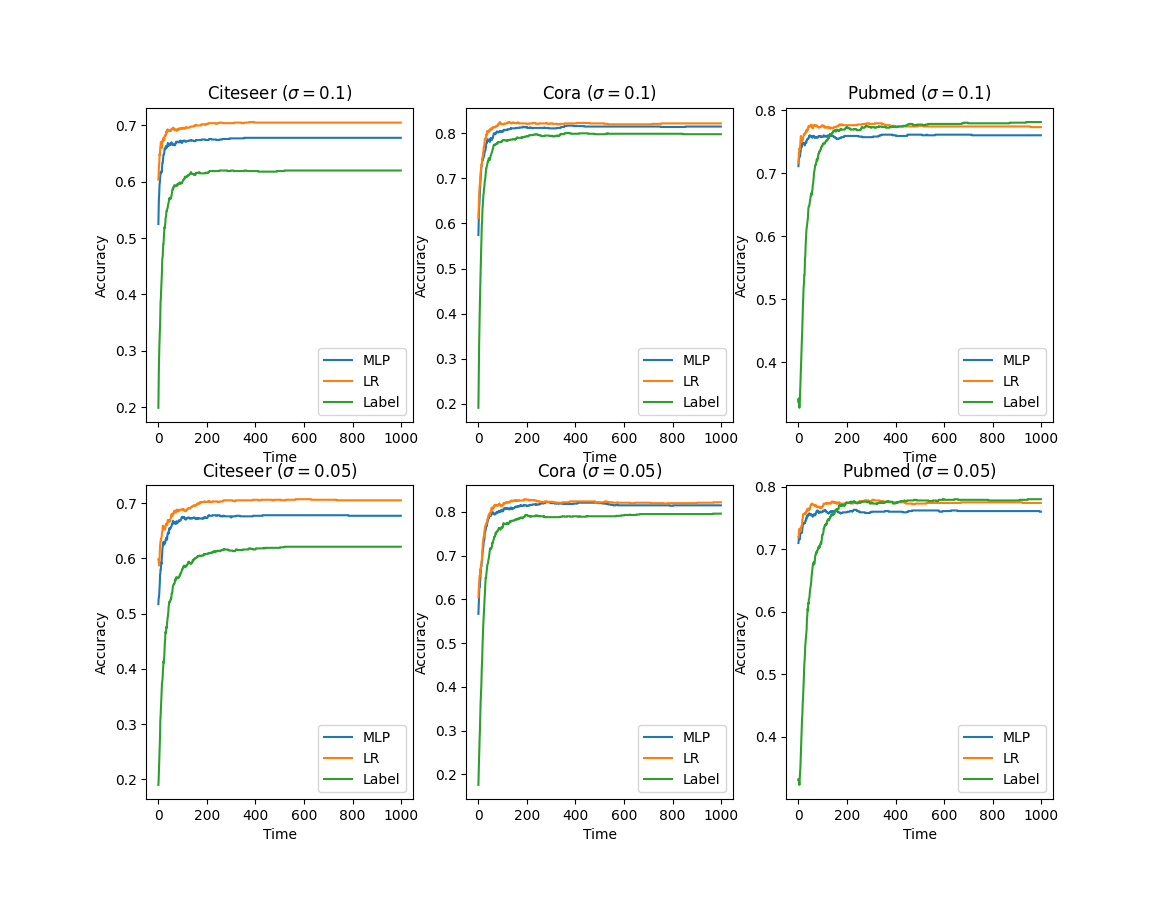}
    \caption{Accuracy convergence over $1000$ time steps of p2pGNN over all datasets for pre-trained base classifiers and label diffusion. Simulated peer-to-peer communication between neighbors takes place with rates uniformly sampled from the range $[0,\sigma_{\max}]$ where the maximum communication frequency $\sigma_{\max}$ is either 0.1 (top) or 0.05 (bottom).}
    \label{fig:convergence}
\end{figure*}

\section{Conclusions and Future Work}
In this work, we investigated the problem of letting nodes of unstructured peer-to-peer networks classify themselves under communication uncertainty and proposed that homophilous communication links can be mined with decoupled GNN diffusion to improve base classifier accuracy. We thus introduced a decentralized implementation of diffusion, called p2pGNN, whose fragments run on devices and mine network links as irregular peer-to-peer communication takes place. Theoretical analysis and experiments on three simulated peer-to-peer networks from labeled graph data showed that combining pre-trained (and often gossip-trained) base classifiers with our approach successfully improves their accuracy at comparable degrees to fully centralized decoupled graph neural networks while introducing non-intrusive communication overheads.
\par
For future work, we aim to improve gossip training to let it account for our setting's non-identically distributed spread of data samples across graph nodes, which systemically arises when each device accommodates only one sample. We are also interested in addressing privacy concerns and societal biases in our approach and explore automated hyperparameter selection.

\bibliographystyle{unsrtnat}
\bibliography{main}

\begin{IEEEbiography}[{\includegraphics[width=1in,height=1.25in,clip,keepaspectratio]{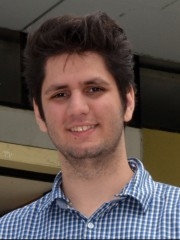}}]{Emmanouil Krasanakis} was born in Thessaloniki, Greece in 
1991. He received the diploma in electrical and computer engineering from the Aristotle University of Thessaloniki in 2015. Since 2016 he has been a Ph.D. student in the same institute and a Research Assistant in the Information Technologies Institute. He is the co-author of 4 journal articles and 9 conference papers. His research interests include graph mining and learning, decentralized learning, fairness in machine learning, formal methods, and automated software engineering.
\end{IEEEbiography}

\begin{IEEEbiography}[{\includegraphics[width=1in,height=1.25in,clip,keepaspectratio]{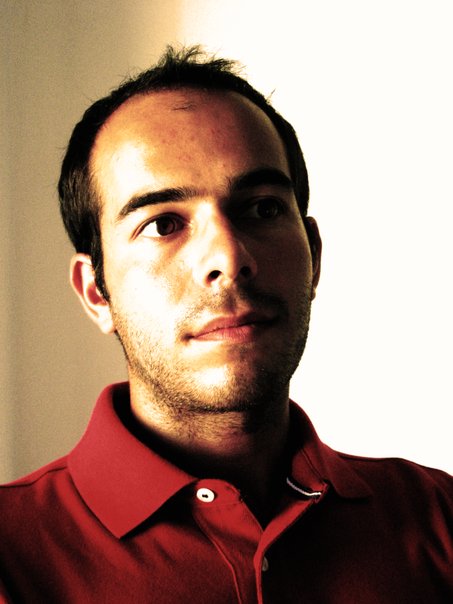}}]{Symeon Papadopoulos} was born in Thessaloniki, Greece in 1981. He received the diploma in electrical and computer engineering from the Aristotle University of Thessaloniki in 2004, the Professional Doctorate in Engineering from the Technical University of Eindhoven in 2006, the Masters in Business Administration from the Blekinge Institute of Technology in 2009 and the Ph.D. degree from the Aristotle University of Thessaloniki in 2012. He is working as Senior Researcher in the Information Technologies Institute. He is the co-author of more than 30 journal articles, 110 conference papers and 14 book chapters. His research interests include artificial intelligence for multimedia analysis, social media and big data analysis, and applications for media and journalism with a focus on online disinformation and media verification, as well as on environment, society and others. He holds 3 patents.
\par
Dr. Papadopoulos has co-organized workshops and summer schools, the most recent ones being the series of \textit{Conversations} workshops (2017-2021). He has guest edited two special issues, two books and received 3 best paper, 3 best paper candidate and 2 best demo awards.
\end{IEEEbiography}

\begin{IEEEbiography}[{\includegraphics[width=1in,height=1.25in,clip,keepaspectratio]{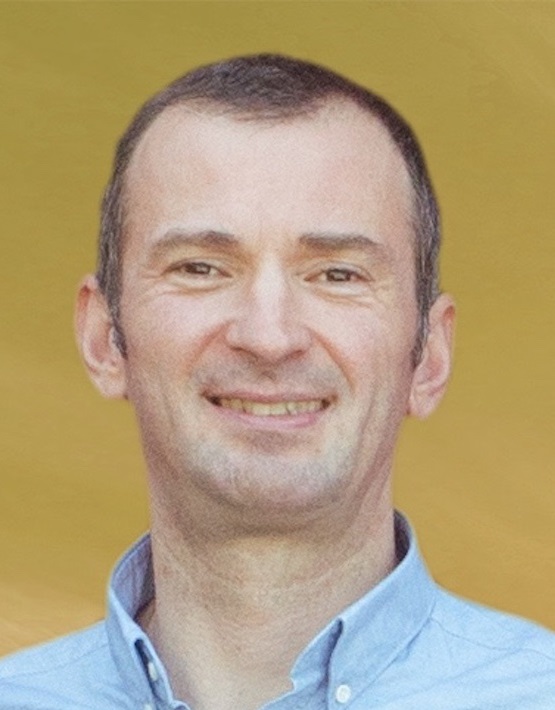}}]{Ioannis Kompatsiaris} was born in Thessaloniki in 1973. He received the B.S. degree in electrical and computer engineering from the Aristotle University of Thessaloniki in 1996 and the Ph.D. degree from the same University in 2001. He is the Director of the Information Technologies Institute and Head of the Multimedia Knowledge and Social Media Analytics Laboratory. His research interests include image and video analysis, big data and social media analytics, semantics, human computer interfaces (AR and BCI), eHealth and security applications. He is the co-author of 178 journals articles, more than 560 conference papers and 59 book chapters. He is a Senior Member of IEEE and ACM and holds 8 patents.
\par
Dr. Kompatsiaris has organized conferences, workshops and summer schools and he is an Associate Editor of the \textit{IEEE Transactions on Image Processing}. Finally, he is a member of the National Ethics and Technoethics Committee and an elected member of IEEE \textit{Image, Video and Multidimensional Signal Processing - Technical Committee (IVMSP - TC)}.
\end{IEEEbiography}

\EOD

\end{document}